\documentclass{article}

     \PassOptionsToPackage{numbers, compress}{natbib}
  \usepackage[preprint]{neurips_2026}

\usepackage[utf8]{inputenc} 
\usepackage[T1]{fontenc}    
\usepackage{hyperref}       
\usepackage{url}            
\usepackage{booktabs}       
\usepackage{amsfonts}       
\usepackage{nicefrac}       
\usepackage{microtype}      
\usepackage{xcolor}         

\usepackage{amsmath}
\usepackage{amssymb}
\usepackage{mathtools}
\usepackage{amsthm}

\usepackage{amsmath,amsfonts,bm}

\def\eqref#1{equation~\ref{#1}}

\def\1{\bm{1}}

\DeclareMathAlphabet{\mathsfit}{\encodingdefault}{\sfdefault}{m}{sl}
\SetMathAlphabet{\mathsfit}{bold}{\encodingdefault}{\sfdefault}{bx}{n}

\def\gA{{\mathcal{A}}}

\def\gH{{\mathcal{H}}}

\def\gX{{\mathcal{X}}}
\def\gY{{\mathcal{Y}}}
\def\gZ{{\mathcal{Z}}}

\newcommand{\E}{\mathbb{E}}

\theoremstyle{plain}
\newtheorem{theorem}{Theorem}

\newtheorem{corollary}{Corollary}

\newtheorem{remark}{Remark}

\title{No-Free-Fairness: Fundamental Limits and Trade-offs in Learning Systems}

\author{%
  Khoat Than \\
  Hanoi University of Science and Technology\\
  \texttt{khoattq@soict.hust.edu.vn} \\
}

\begin{document}

\maketitle

\begin{abstract}
  In this paper, we establish a set of theoretical impossibility results, termed the \textit{No-Free-Fairness} theorems, that identify three fundamental sources of disparity in learning sysems. First, we show that when a task exhibits irreducible cost on a subgroup, any decision rule must trade off overall performance with disparity, yielding an inherent fairness--cost frontier. Second, we prove that even in ideal, noise-free settings where a perfectly fair and accurate solution exists, finite-sample learning alone induces nontrivial subgroup disparity, ruling out distribution-free fairness guarantees. More seriously, enforcing strict relative fairness creates a statistical bottleneck: achieving low cost may require exponentially many samples. Third, we show that limitations of the model class can independently induce disparity: if the model cannot represent accurate solutions for a subgroup, fairness remains unattainable regardless of data or training procedure.
  Overall, these results demonstrate that unfairness is not solely a consequence of biased data or suboptimal optimization, but arises from the intrinsic structure of decision problems, the constraints of finite data, and the expressivity of models. Our framework applies broadly beyond standard supervised learning, and suggests that achieving fairness requires explicit trade-offs and should be treated as a core design consideration.
\end{abstract}

\section{Introduction}

\begin{figure*}[t]
\centering
\includegraphics[width=\linewidth]{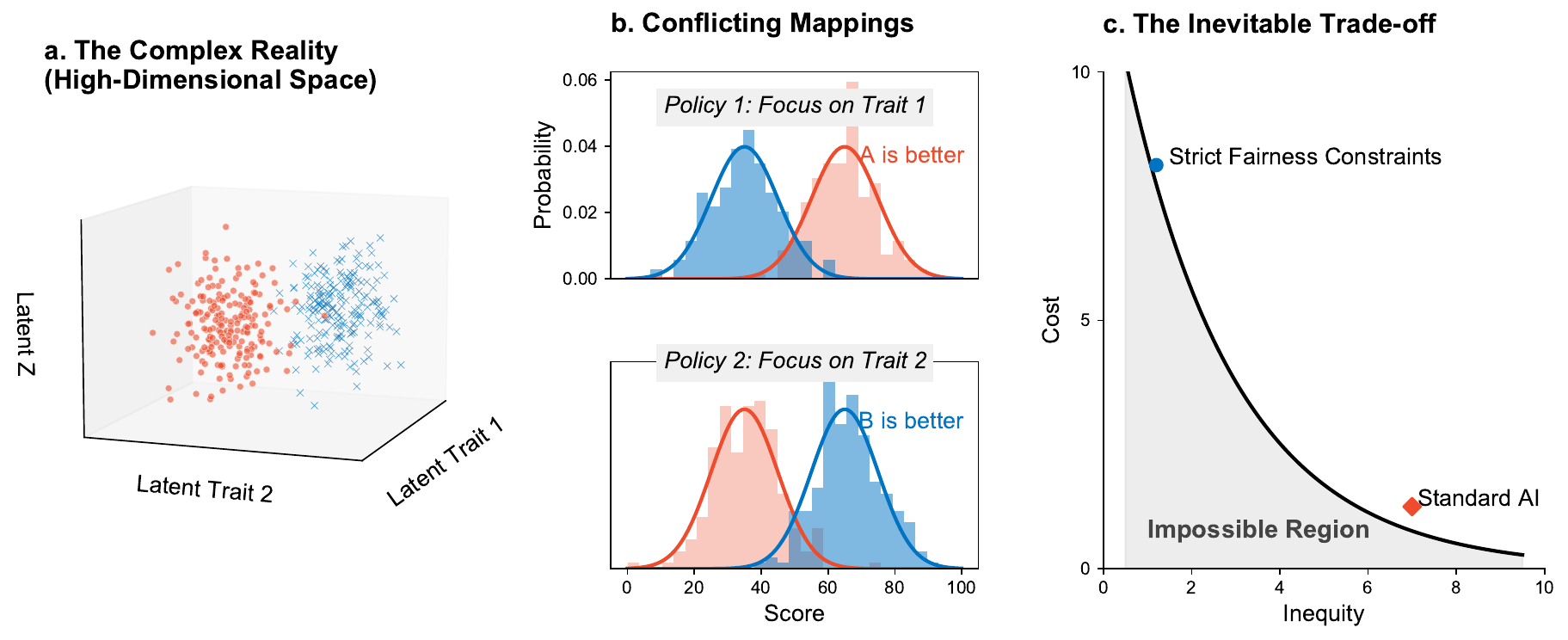}
\caption{Inherent Fairness--Cost Trade-off. (a) Individuals (or agents) are represented as points in a high-dimensional latent space capturing task-relevant attributes (traits). (b) Each task, together with its associated decision rule or policy, induces a mapping from this latent space to lower-dimensional outcomes. We illustrate that for some valid tasks, this mapping necessarily collapses information unevenly across subpopulations, leading to differential costs. (c) As a consequence, achievable systems lie on a fairness--cost Pareto frontier, where regions corresponding to simultaneously perfect fairness and minimal cost are theoretically unattainable.}
\label{fig:impossibility-vis}
\end{figure*}

As machine learning systems are increasingly deployed in high-stakes domains, from credit underwriting to generative AI, concerns regarding algorithmic fairness have moved to the forefront of the field \citep{kearns2018preventing,buolamwini2018gender,sheng2021societal,d2022underspecification}. While the community has proposed numerous fairness definitions and mitigation strategies \citep{dwork2012fairness,hardt2016equality,liu2018delayed,hashimoto2018fairness,zhang2024fairness,cheng2025biasfilter}, a growing literature establishes trade-offs between fairness criteria and predictive performance \citep{kleinberg2017inherent,chouldechova2017fair,pleiss2017fairness}. Most existing impossibility results, however, operate at the level of \emph{metric incompatibility}: they show that under differing base rates, certain fairness notions cannot be satisfied simultaneously. This leaves open three fundamental questions:

\begin{enumerate}
\item[Q1.] \emph{Does disparity emerge inevitably from the structural properties of statistical learning itself?} 
\item[Q2.] \emph{Can disparity always be resolved through improved data curation or algorithmic debiasing?}
\item[Q3.] \emph{Is there unavoidable trade-off  in  settings which admit perfectly fair and accurate solutions?}
\end{enumerate}

Prior work provides partial answers to Q1 and Q2. Using absolute disparity, \cite{cummings2019compatibility} show that one cannot simultaneously achieve differential privacy, exact fairness, and non-trivial accuracy, revealing a fundamental fairness--cost trade-off. \cite{pinzon2022impossibility} further establish the existence of tasks for which no predictor is both fair and accurate, even without privacy constraints. However, these results are largely confined to classification settings and rely on existential constructions. As such, \textbf{it remains unclear whether these limitations extend to more general learning problems, and whether they reflect isolated pathological cases or more pervasive phenomena.}

In this work, we study fairness through the \emph{risk ratio}, rather than the more commonly used absolute disparity \cite{cummings2019compatibility,pinzon2022impossibility}. While absolute disparity is analytically convenient and admits standard convergence guarantees \cite{woodworth2017learning,agarwal2018reductions,laakom25FairnessOverfitting}, it is inherently scale-dependent and can obscure disproportionate harm in low-risk regimes. In contrast, the risk ratio captures \emph{relative} differences in outcomes, is invariant to the overall scale of risk, aligns with ratio-based regulatory standards, and remains meaningful under class imbalance and rare-event settings (see Appendix~\ref{app:risk-ratio-justification}).

We establish a unified set of \textbf{No-Free-Fairness} results that identify fundamental limits to achieving fairness in learning systems. Moving beyond metric incompatibility and existential constructions, we show that disparity can arise from the basic mechanics of learning itself, and that such effects occur under broad conditions. Our contributions isolate three distinct sources of unavoidable disparity:
\begin{itemize}
    \item \textit{No-Free-Fairness from Data (Task-Inherent Limits).}    We show that when a problem is inherently uncertain for some subgroup, no decision rule can simultaneously achieve both high overall performance and equal outcomes across groups. In such settings, subgroup costs are intrinsically coupled: improving performance for one group necessarily affects another when task difficulty is heterogeneous. In other words, when some cases are fundamentally harder than others, perfect fairness is impossible. As a result, all decision rules are constrained by a fundamental \textit{ fairness--cost trade-off}, forming a Pareto frontier as depicted in Figure~\ref{fig:impossibility-vis}.

    \item \textit{No-Free-Fairness from Learning (Algorithmic and Statistical Limits).}    Even in ideal settings which admit perfectly fair and accurate decision rules, we show that learning from finite data introduces unavoidable disparities. In particular, small or rare groups are harder to learn accurately, which can lead to uneven performance. Moreover, enforcing strict fairness can require dramatically (exponentially many) more data, creating a fundamental \textit{tension between fairness and data efficiency}. Strictly enforcing near-perfect fairness can lead to non-vanishing overall risk, even with unlimited data.

    \item \textit{No-Free-Fairness from Model Classes (Structural Limits).}    We show that the choice of model itself can create fairness limitations. If a model lacks the flexibility to capture patterns specific to a subgroup, then unequal performance can persist even with unlimited, high-quality data. In this case, the limitation comes from how the model represents the problem, not from the data or the learning process.
\end{itemize}

The first and third results hold true for any non-negative cost function. Therefore, those results apply well to a wide range of contexts, e.g., regression, ranking, gererative tasks. Moreover, ``uncertainty" is ubiquitous in practice \citep{northcutt2021confident,jiang2018mentornet,song2022learning,koh2021wilds}. 
Taken together, our results show that fairness is not solely a consequence of biased data or imperfect algorithms. Rather, it is jointly determined by task structure, statistical estimation, and model expressivity. In particular, we provide strong evidence that (Q1) unfairness can emerge from intrinsic properties of learning,  that (Q2) universal elimination of disparity through improved data or algorithmic design alone is unattainable  in general, and that (Q3) there is a fundamental tension between fairness and data efficiency, even in ideal settings where perfectly fair and accurate solutions exist. Achieving fairness therefore requires explicit trade-offs and should be treated as a core design consideration.

\section{Impossibility Results} \label{sec-Impossibility}

This section presents our main impossibility results, touching on three axes: data distribution, algorithm, and hypothesis class. To facilitate detailed discussions, we need the following concepts.

\subsection{Preliminaries}

Let $\gX$ be a measurable space of individuals (or entities, states), and let $\gZ$ denote a space of outcomes capturing all stochastic elements of the environment. A (possibly randomized) decision rule  (or policy or hypothesis or model) is a measurable mapping $h: \gX \to \gY$, where $\gY$ can be a special case or subset of $\gZ$.

Let $(x,Z) \sim P$ be a joint distribution over $\gX \times \gZ$. We define a non-negative cost function $\ell(h(x),Z)$ to measure an undesirable property of the rule $h$ (e.g., loss, error, latency, or resource waste) for a specific instance $(x,Z)$, and the population \textit{risk} $a(h) = \E[\ell(h(x),Z)]$. For any measurable subset $\gX_b \subseteq \gX$ with $p_b = \mathbb{P}(x \in \gX_b) \in (0,1)$, define $a_b(h) = \E[\ell(h(x),Z)\mid x \in \gX_b]$ and  $a_{\bar b}(h) = \E[\ell(h(x),Z)\mid x \notin \gX_b]$.

A \emph{learner} (or decision-maker or learning algorithm) is a (possibly randomized) mapping $\gA$ that takes as input a dataset $S\sim P^n$ and outputs a decision rule $h_S = \gA(S)$. The \textit{learning task} is to produce a rule with small population risk $a(h_S)$.

\textbf{Instantiation of $\ell$ for common fairness notions.} Many standard fairness metrics can be expressed as group-conditional risks $a_b(h)$ via appropriate choices of $\ell$:
\begin{itemize}
\item \emph{Error parity.} For binary classification with $Z=y\in\{0,1\}$ and $h(x)\in\{0,1\}$, choosing $\ell(h(x),Z)=\mathbf{I}\{h(x)\neq Z\}$ yields $a_b(h)=\mathbb{P}(h(x)\neq y\mid x\in\gX_b)$, where $\mathbf{I}$ is the indicator function. So disparities correspond to differences (or ratios) in misclassification rates across groups.

\item \emph{Equalized odds.} Group-specific error rates are obtained by masking the loss: false negative rate (FNR) via $\ell(h(x),Z)=\mathbf{I}\{Z=1\}\mathbf{I}\{h(x)=0\}$, giving $a_b(h)=\mathbb{P}(h(x)=0\mid Z=1, x\in\gX_b)$; false positive rate (FPR) via $\ell(h(x),Z)=\mathbf{I}\{Z=0\}\mathbf{I}\{h(x)=1\}$, giving $a_b(h)=\mathbb{P}(h(x)=1\mid Z=0, x\in\gX_b)$.

\item \emph{Demographic parity.} This notion depends only on predictions and can be written with $\ell(h(x),Z)=\mathbf{I}\{h(x)=1\}$ (independent of $Z$), yielding $a_b(h)=\mathbb{P}(h(x)=1\mid x\in\gX_b)$.

\item \emph{Cost-based fairness.} In more general decision settings, let $Z$ encode outcomes and define $\ell$ as an application-specific cost. Then $a_b(h)$ measures expected cost within group $b$, and fairness corresponds to controlling disparities in these costs.
\end{itemize}

These instantiations show that a broad class of group fairness criteria can be unified as disparities in conditional risks under suitable choices of $\ell$. 

\textbf{Fairness in this work.} We study fairness through the \emph{risk ratio} $\varepsilon = a_{\bar{b}}(h)/a_b(h)$, rather than the more commonly used absolute disparity $\Delta(h) =|a_{\bar{b}}(h) - a_b(h)|$ \cite{woodworth2017learning,agarwal2018reductions,laakom25FairnessOverfitting,cummings2019compatibility,pinzon2022impossibility}. While the latter is analytically convenient and admits standard polynomial convergence guarantees, it is inherently scale-dependent and can obscure disproportionate harm in low-risk regimes. In particular, for fixed absolute gap $\Delta(h)$, the ratio satisfies $\varepsilon = 1 + \Delta(h)/a_b(h)$ (when $a_{\bar b}(h)\ge a_b(h)$), which becomes large as $a_b(h)\to 0$. In contrast, $\varepsilon$ directly captures multiplicative disparity, is invariant to global rescaling of risks, aligns with ratio-based regulatory criteria, and remains meaningful under class imbalance and rare-event settings. We provide a detailed justification for this choice in Appendix~\ref{app:risk-ratio-justification}.

\subsection{No-Free-Fairness Task}

We first investigate the structural properties of a learning task. In practice, a sampling procedure can induce a data distribution $P$ and hence determines the hardness of a learning task. The following theorem  reveals that the task itself can inherently induce equity, whose proof is in Appendix~\ref{app-Proofs-No-free-fairness-sampling}.

\begin{theorem}[No-Free-Fairness Task]\label{thm-No-free-fair-sampling}
Given a non-negative cost $\ell$, consider any distribution $P$ over a data space $\gX \times \gZ$ with $a(h^*)>0$, where $h^* \in \arg\min_h a(h)$ is the optimal one among all  measureable decision rules. Denote $\gX_b \subseteq \gX$ as the area with measure $p_b = \mathbb{P}(\gX_b)$ that incurs $a_b(h^*) >0$. Then any measurable rule $h$ satisfies 
\begin{equation}
\label{thm-No-free-fair-sampling-eq}
a(h) \ge \varepsilon a_b(h^*) \big[1 - p_b \mathbf{I}\{\varepsilon>1\}\big] 
\end{equation}
where $\varepsilon = \frac{a_{\bar{b}}(h)}{a_b(h)}$, meaning any decision rule is either bad or unfair.
\end{theorem}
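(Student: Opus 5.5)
The plan is to write the overall risk as a mixture of the two subgroup risks, factor out $a_b(h)$ so that the ratio $\varepsilon$ appears, and then lower bound $a_b(h)$ by $a_b(h^*)$ using the optimality of $h^*$. By the law of total expectation, for any measurable rule $h$,
\begin{equation*}
a(h) = p_b\, a_b(h) + (1-p_b)\, a_{\bar b}(h) = a_b(h)\,\big[p_b + (1-p_b)\varepsilon\big],
\end{equation*}
where $\varepsilon = a_{\bar b}(h)/a_b(h)$. This factorization requires $a_b(h) > 0$, which the next step provides.

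\textbf{Key step: $a_b(h) \ge a_b(h^*)$ for every measurable $h$.} This is a patching argument, and it is the step needing care. Given $h$, I define the spliced rule $h'$ that equals $h$ on $\gX_b$ and $h^*$ on $\gX \setminus \gX_b$. It is measurable because $\gX_b$ is measurable; if $h$ is randomized, I splice the randomized rules the same way. Then
\begin{equation*}
a(h') - a(h^*) = p_b\big[a_b(h) - a_b(h^*)\big].
\end{equation*}
Since $h^*$ minimizes $a$ over all measurable rules, $a(h') \ge a(h^*)$, and $p_b > 0$ gives the claim. In particular $a_b(h) \ge a_b(h^*) > 0$, so $\varepsilon$ is well defined and the factorization above is legitimate. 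The main obstacle is making sure this splicing is allowed: the minimization must range over a class closed under such splicing, which "all measurable decision rules" is. I would note this explicitly and flag that the argument breaks for restricted model classes.

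\textbf{Case split on $\varepsilon$.}
\begin{itemize}
\item If $\varepsilon \le 1$, then $p_b + (1-p_b)\varepsilon \ge p_b\varepsilon + (1-p_b)\varepsilon = \varepsilon$. Hence $a(h) \ge \varepsilon\, a_b(h) \ge \varepsilon\, a_b(h^*)$, which matches \eqref{thm-No-free-fair-sampling-eq} since the indicator vanishes.
\item If $\varepsilon > 1$, I drop the nonnegative term $p_b$ to get $p_b + (1-p_b)\varepsilon \ge (1-p_b)\varepsilon$. Hence $a(h) \ge \varepsilon\,(1-p_b)\, a_b(h^*)$, which is \eqref{thm-No-free-fair-sampling-eq} with the indicator equal to one.
\end{itemize}

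Finally, I would comment on the interpretation. If $\varepsilon$ is near $1$ (fair), the bound forces $a(h) \gtrsim a_b(h^*)$. Making $a(h)$ small instead requires $\varepsilon$ to be small, i.e., unfairness in favor of $\bar b$. This yields the "bad or unfair" dichotomy.
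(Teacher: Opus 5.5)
Your proposal is correct and follows the paper's proof: the same mixture decomposition $a(h) = p_b\, a_b(h) + (1-p_b)\,\varepsilon\, a_b(h)$ and the same case split on $\varepsilon \le 1$ versus $\varepsilon > 1$. The one difference is that you justify $a_b(h) \ge a_b(h^*)$ with an explicit splicing argument, whereas the paper simply asserts it ``due to the optimality of $h^*$''. Your version also shows why this step needs the minimization to range over all measurable rules, which is why the paper's limited-model-class theorem instead defines $a_b^*$ directly as a minimum of the subgroup risk.
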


This theorem applies to any decision-making problem in which a rule $h$ maps inputs $x$ to actions and incurs a non-negative cost $\ell(h(x),Z)$ under uncertainty $Z$. The quantity $a(h)$ represents the \emph{average cost} of the decision rule, while $a_b(h)$ and $a_{\bar b}(h)$ denote the average costs on two subpopulations. The ratio $\varepsilon = a_{\bar b}(h)/a_b(h)$ captures disparity in performance across groups.
The condition $a(h^*)>0$ simply states that the problem is inherently noisy or uncertain: even the optimal rule incurs nonzero cost on some region $\gX_b$. This is typical in real-world settings such as medical decision-making, resource allocation, or stochastic systems.

Despite having a surprizingly simple proof, Theorem~\ref{thm-No-free-fair-sampling} has several important implications:
\begin{itemize}
	\item \textit{Structural Limits of Universal Fairness.} Theorem~\ref{thm-No-free-fair-sampling} imposes a lower bound on the overall cost $a(h)$ as a function of disparity $\varepsilon$. In particular, enforcing near parity ($\varepsilon \approx 1$) exposes the irreducible cost on $\gX_b$, yielding $a(h)\gtrsim a_b(h^*)$. Writing $a(h)=p_b a_b(h)+(1-p_b)a_{\bar b}(h)$ shows that subpopulation costs are intrinsically coupled: improving outcomes for one group without affecting another is impossible when task difficulty is heterogeneous. Unlike prior impossibility results that rely on incompatibility between fairness criteria \citep{kleinberg2017inherent,chouldechova2017fair,pleiss2017fairness}, this limitation follows directly from the structure of expected cost and holds for any measurable rule and any non-negative $\ell$.
	
	\item \textit{Quantified Fairness--Cost Trade-off.} The bound (\ref{thm-No-free-fair-sampling-eq}) defines a feasible region of achievable pairs $(a(h),\varepsilon)$. When $\varepsilon \le 1$, achieving near-uniform performance forces $a(h)=\Omega(a_b(h^*))$, so the intrinsic difficulty on $\gX_b$ sets a fundamental cost of fairness. When $\varepsilon>1$, the factor $(1-p_b)$ attenuates the bound, reflecting that smaller groups contribute less to $a(h)$. This explains why fairness costs may appear small in aggregate metrics under class or group imbalance, even though disparities persist structurally. These observations are consistent with prior theoretical and empirical findings on fairness--cost trade-offs \citep{dwork2012fairness,hardt2016equality}.
	
	\item \textit{Task Variability and Lack of Uniform Guarantees.}	The trade-off depends on $a_b(h^*)$, which is determined by the underlying outcome distribution. Fixing the marginal $P_X$ of $x$ while varying the conditional structure of the environment changes the intrinsic difficulty across subpopulations and thus the achievable $(a(h),\varepsilon)$ region. Consequently, a rule that attains a favorable trade-off for one task may violate the bound under another. No method can guarantee uniformly low cost and low disparity across all admissible environments, highlighting an inherent sensitivity to distributional changes, consistent with observations in distribution shift settings \citep{koh2021wilds}.
\end{itemize}

Theorem~\ref{thm-No-free-fair-sampling} formalizes a general principle: \textit{whenever a decision problem exhibits irreducible uncertainty, one must trade off overall performance against equality across groups.} This limitation is structural, independent of the specific model or algorithm, and persists across application domains.

\begin{remark}[On the practicality of the condition $a(h^*)>0$] \label{remark-practicality-tasks}
The condition $a(h^*)>0$ corresponds to the standard notion of a \emph{non-realizable} setting in learning theory: there is no decision rule that achieves zero population cost under $P$. Far from being restrictive, this is the typical regime in practice. In real-world problems, noise, ambiguity, unobserved confounders, and intrinsic stochasticity in the outcome mechanism (e.g., $P_{Y|X}$ in supervised learning) imply that even the optimal rule incurs nonzero cost on some subset $\gX_b$, i.e., $a_b(h^*)>0$ \citep{zhang2021understandingDL,arora2019fine}. Moreover, many application domains are inherently noisy or ill-defined---such as medical diagnosis, human annotation, or natural language understanding---where multiple plausible outcomes or measurement noise make perfect decisions unattainable \citep{northcutt2021confident,jiang2018mentornet,song2022learning}. Distribution shift \citep{koh2021wilds} further reinforces this point: even if a problem appears realizable under one distribution, it may be non-realizable when the environment changes.

Therefore, $a(h^*)>0$ should be viewed as a mild and ubiquitous property of realistic decision problems. Importantly, it is precisely this irreducible cost that makes fairness nontrivial: if zero cost were achievable everywhere, disparities could be eliminated without trade-offs. Theorem~\ref{thm-No-free-fair-sampling} thus characterizes the practically relevant regime in which fairness--cost trade-offs are unavoidable.
\end{remark}

\subsection{No-Free-Fairness Algorithm}

The previous result shows the scenarios, where the learning tasks may not be learnable. However, for some tractable tasks (where $P$ is \textit{realizable}, i.e., $a(h^*)=0$), one may hope to find some fair solutions, by careful modeling and data curation. Unfortunately, it may not always be the case as revealed by the following result for binary classification, whose proof appears in Appendix~\ref{app-Proofs-No-free-fairness-algorithm}.

\begin{theorem}[No-Free-Fairness  Algorithm]\label{thm-No-free-fair-algorithm}
Consider any measurable input space $\gX$ which admits a non-atomic distribution, 0-1 loss $\ell$, any real number $p_b \in (0,1)$, any $c \in (0,1]$. Denote $\mathcal{A}$ as a (possibly randomized) learner  that maps each dataset $S$ to a decision rule $\gA(S)$, and $\varepsilon_S = \frac{a_{\bar b}(\gA(S))}{\max\{a_b(\gA(S)), c\}}$ as the cost ratio.
Let $\mathcal{P}_{\text{real}}(p_b)$ be the class of all realizable probability distributions over $\mathcal{X} \times \{0,1\}$ that contain a subset $\mathcal{X}_b$ with marginal probability $p_b$. Then 
\begin{eqnarray}
\inf_{\gA} \sup_{P \in \mathcal{P}_{\text{real}}(p_b)} \mathbb{E}_{S \sim P^n} [\varepsilon_S] &\ge& 1/4, \\
\inf_{\gA} \sup_{P \in \mathcal{P}_{\text{real}}(p_b)} \frac{\mathbb{E}_{S \sim P^n} [a(\gA(S))]}{\mathbb{E}_{S \sim P^n} [\varepsilon_S]} &\ge& c(1 - p_b).
\end{eqnarray}
\end{theorem}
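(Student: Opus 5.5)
The second inequality follows from a deterministic, pointwise bound. The first is a no-free-lunch argument restricted to the complement $\gX\setminus\gX_b$.

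\emph{Step 1 (pointwise reduction).} For any rule $h$ I use the decomposition $a(h)=p_b a_b(h)+(1-p_b)a_{\bar b}(h)$. Since $a_b(h)\ge 0$, this gives $a(h)\ge (1-p_b)a_{\bar b}(h)$. By definition $a_{\bar b}(\gA(S))=\varepsilon_S\max\{a_b(\gA(S)),c\}\ge c\,\varepsilon_S$, so
\[
a(\gA(S))\ \ge\ c(1-p_b)\,\varepsilon_S
\]
for every $S$, every $P$ and every learner. Taking $\E_{S\sim P^n}$ (including the learner's internal randomness) gives $\E[a(\gA(S))]\ge c(1-p_b)\E[\varepsilon_S]$ for every $P$.

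For the second inequality I fix $\gA$ and take any $P$ with $\E[\varepsilon_S]>0$; Step 2 shows such $P$ exist. For that $P$ the ratio is at least $c(1-p_b)$, hence so is the supremum, and then the infimum over $\gA$.

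For the first inequality I note that $0-1$ loss gives $\max\{a_b,c\}\le 1$, so $\varepsilon_S\ge a_{\bar b}(\gA(S))$. It therefore suffices to show $\sup_P \E_S[a_{\bar b}(\gA(S))]\ge 1/4$ for every learner $\gA$.

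\emph{Step 2 (construction).} I fix a non-atomic distribution $\mu$ on $\gX$. Non-atomicity (Lyapunov/Sierpi\'nski) lets me choose a measurable $\gX_b$ with $\mu(\gX_b)=p_b$. It also lets me partition $\gX\setminus\gX_b$ into $2n$ measurable cells $C_1,\dots,C_{2n}$, each of $\mu$-mass $(1-p_b)/(2n)$.

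For each $\sigma\in\{0,1\}^{2n}$ I define the deterministic labeling $f_\sigma$: it equals $0$ on $\gX_b$ and equals $\sigma_j$ on $C_j$. Let $P_\sigma$ be the law of $(x,f_\sigma(x))$ with $x\sim\mu$. Each $P_\sigma$ is realizable, since $a(f_\sigma)=0$, and it assigns mass $p_b$ to $\gX_b$. Hence $P_\sigma\in\mathcal{P}_{\text{real}}(p_b)$.

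\emph{Step 3 (averaging).} I bound the supremum by the average over $\sigma$ drawn uniformly, and swap the order of expectations by Fubini; this also absorbs the learner's randomness. Conditioned on the sample $S$, at most $n$ cells contain a sample point, so at least $n$ of the $2n$ cells are unseen. On an unseen cell $C_j$, the bit $\sigma_j$ is independent of $S$ and of the learner's output. So for each $x\in C_j$ the prediction $\gA(S)(x)$ is wrong with probability exactly $1/2$ over $\sigma_j$. Integrating over $x\sim\mu$ conditioned on $\gX\setminus\gX_b$ gives
\[
\E_\sigma\E_S[a_{\bar b}(\gA(S))]\ \ge\ \tfrac{n}{2n}\cdot\tfrac12=\tfrac14 .
\]
Hence some $\sigma$ achieves $\E_S[\varepsilon_S]\ge\E_S[a_{\bar b}]\ge 1/4>0$. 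This proves the first inequality and supplies the $P$ with $\E[\varepsilon_S]>0$ needed in Step 1.

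The main technical obstacle is Step 3: the conditional-independence argument on unseen cells has to be written carefully. It must cover arbitrary measurable, possibly randomized outputs $\gA(S)$, which need not be constant on cells. The key point is that the bit $\sigma_j$ for an unseen cell enters neither $S$ nor the learner's internal randomness, and Fubini justifies the swap of $\E_\sigma$, $\E_S$ and the integral over $x$.
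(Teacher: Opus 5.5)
Your proposal is correct and follows essentially the same route as the paper. The second inequality comes from the pointwise bound $a(\gA(S))\ge(1-p_b)a_{\bar b}(\gA(S))\ge c(1-p_b)\varepsilon_S$, and the first from a no-free-lunch construction with equal-mass cells outside $\gX_b$ (you take $m=2n$, the paper any $m\ge 2n$), uniformly random labels, the unseen-cell averaging argument, and $\varepsilon_S\ge a_{\bar b}(\gA(S))$. You also check explicitly that the constructed $P$ has $\E[\varepsilon_S]\ge 1/4>0$, so the ratio in the second inequality is well defined; the paper leaves this implicit.
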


This theorem establishes a fundamental \emph{minimax} lower bound on subgroup disparity, formalizing an unavoidable trade-off between overall cost and fairness. In simple terms, it shows that with finite data, no learning procedure can consistently avoid unequal performance across groups—even in ideal, noise-free settings where a perfectly fair and accurate rule exists. Some inportant implications thus follow.

\begin{itemize}
\item \textit{Minimax Limits and Algorithm-Induced Unfairness.} Theorem~\ref{thm-No-free-fair-algorithm} provides a distribution-free, minimax lower bound on disparity in the \emph{realizable} regime. In particular, the relation $\frac{\E_{S}[a(\gA(S))]}{\E_{S}[\varepsilon_S]} \ge c(1-p_b)$ links disparity to overall cost. Thus, even when there exists a rule $h^*$ with zero cost everywhere ($a(h^*)=0$), \emph{no learner can reliably produce low-disparity solutions from finite samples}. This is a minimax statement: the limitation arises from worst-case (but still realizable) data-generating processes and the randomness of sampling, not from noise or model mismatch. In contrast, Theorem~\ref{thm-No-free-fair-sampling} gives a \emph{pointwise} lower bound in the non-realizable case, where irreducible cost $a_b(h^*)>0$ directly forces a trade-off. Here, even when $a_b(h^*)=0$, sampling alone can induce constant disparity in expectation.

\item \textit{Limitations of the ``Clean Data'' Narrative.} A common belief is that unfairness mainly stems from biased or noisy data \citep{FoundationModels21,Trewin2019fairness,buyl2024inherentLimits,calmon2017optimizedFair,Caton2024Fairness}. Theorem~\ref{thm-No-free-fair-algorithm} shows that this explanation is incomplete. Even with perfectly clean and consistent data (realizability), one can still have $\E_{S}[\varepsilon_S]\ge 1/4$. Hence, \emph{good data alone does not guarantee fairness}. The underlying reason is statistical: subpopulations with small probability mass $p_b$ are underrepresented in finite samples, making their behavior harder to learn. As a result, a learner may achieve low overall cost while performing poorly on these regions. This complements Theorem~\ref{thm-No-free-fair-sampling}, where irreducible cost drives the trade-off; here, \emph{sampling variability alone} creates disparity. This phenomenon is closely related to underspecification \citep{d2022underspecification}, where multiple solutions fit the data equally well but differ significantly on subgroups.

\item \textit{Fairness as a No-Free-Lunch Phenomenon.} This result can be viewed as a fairness analogue of the No Free Lunch theorem \citep{wolpert1997no}. While classical results show that no algorithm can uniformly minimize overall cost across all distributions, here we show a stronger, localized limitation: without additional assumptions on $P$ or on the structure of $\gX_b$, no learner can guarantee uniformly small disparity.  $\varepsilon_S$ can reach $1/c$, meaning that the disparity can be high to ensure small overall cost. Achieving $\E_{S}[\varepsilon_S] \approx 1$ therefore requires \emph{inductive bias}, such as structural assumptions, regularity, or explicit fairness constraints. Together with Theorem~\ref{thm-No-free-fair-sampling}, this yields a unified picture: (i) \emph{algorithmic limits} prevent distribution-free fairness guarantees even when perfect solutions exist, and (ii) \emph{structural limits} (Theorem~\ref{thm-No-free-fair-sampling}) impose trade-offs when irreducible cost is present. Fairness cannot be made universal: it depends on the problem structure and the assumptions imposed.

\end{itemize}

\begin{remark}[Practical relevance of threshold $c$]
We use $c$ to define $\varepsilon_S$ not only to avoid degeneracy when \(a_b(h_S)\) of a model $h_S$ is small, but also to encode a \emph{minimum level of statistical relevance}. When \(a_b(h_S) < c\), we have $\varepsilon_S = a_{\bar b}(h_S)/c$ which is strictly less than the true disparity $a_{\bar b}(h_S) /a_{b}(h_S)$; it is possible that \(a_b(h_S)\) is negligible while \(a_{\bar b}(h_S)\) remains large, in which case the disparity is substantial and $\varepsilon_S$ remains informative. 

The practical relevance issue arises specifically in the regime where \emph{both} \(a_b(h_S)\) and \(a_{\bar b}(h_S)\) are small (e.g., below \(c\)). In this case, the ratio \(a_{\bar b}(h_S)/a_b(h_S)\) can be dominated by negligible fluctuations, making large relative disparities practically uninformative. When \(a_b(h_S) < c\), the metric effectively transitions to \(\varepsilon_S \approx a_{\bar b}(h_S)/c\), emphasizing \emph{absolute cost control} rather than relative disparity. Thus, \(c\) serves as a \emph{relevance threshold}: above \(c\), \(\varepsilon_S\) reflects relative fairness; below \(c\), it  deprioritizes ratio-based comparisons when both costs are already negligible.

This relevance issue is not unique to ratio-based metrics. Difference-based measures such as \(\lvert a_{\bar b}(h_S) - a_b(h_S)\rvert\)  are inherently \emph{scale-insensitive}: a gap of the same magnitude is treated equally regardless of whether costs are large or near zero. Consequently, such metrics may overemphasize negligible absolute gaps in low-cost regimes, while ratio-based metrics without a floor may overstate disparities. From this perspective, the floor \(c\) provides a principled interpolation by enforcing a minimum scale below which relative comparisons are no longer considered meaningful.
\end{remark}


\subsubsection{Fairness vs. Convergence Rates: A Statistical Tension}

Theorem~\ref{thm-No-free-fair-algorithm} introduces a regularization floor $c \in (0,1]$ in the disparity metric
$\varepsilon_S = a_{\bar b}(h_S) / \max\{a_b(h_S),\,c\}$, where $h_S = \gA(S)$. If $c$ is fixed, then there exist realizable distributions for which $\mathbb{E}[\varepsilon_S]=\Omega(1)$ while simultaneously $\mathbb{E}[a(h_S)] = \Omega(1)$. This shows that \emph{distribution-free statistical consistency is incompatible with enforcing a relative fairness}. On the other hand, allowing $c=c(n)$ to depend on $n$ reveals a fundamental statistical tension, as shown below.

\begin{corollary}\label{cor:rate-lower-bound}
$\inf\limits_{\gA} \sup\limits_{P \in \mathcal{P}_{\mathrm{real}}(p_b)} \E_{S\sim P^n} \left[a(\gA(S))\right] \ge \frac{1-p_b}{4} c(n)$, for every $n$ and $c(n) \in (0,1]$.
\end{corollary}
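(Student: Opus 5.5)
The plan is to derive the corollary from the first bound of Theorem~\ref{thm-No-free-fair-algorithm}, applied with $c=c(n)$, together with an elementary pointwise inequality that ties the overall risk to the floored disparity ratio. The second bound of Theorem~\ref{thm-No-free-fair-algorithm} is not strong enough alone. It gives, for each learner, \emph{some} distribution with a large ratio $\E[a]/\E[\varepsilon_S]$, but that distribution need not be the one on which $\E[\varepsilon_S]\ge 1/4$. I will therefore establish the ratio inequality for \emph{every} $P$ and every dataset, which makes the two suprema compatible.

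\textbf{Step 1 (pointwise coupling).} Fix any learner $\gA$, any $P\in\mathcal{P}_{\mathrm{real}}(p_b)$, any dataset $S$, and write $h_S=\gA(S)$. By the law of total expectation,
\[
a(h_S)=p_b\,a_b(h_S)+(1-p_b)\,a_{\bar b}(h_S)\ \ge\ (1-p_b)\,a_{\bar b}(h_S),
\]
using non-negativity of the 0-1 loss. The definition of $\varepsilon_S$ gives $a_{\bar b}(h_S)=\varepsilon_S\max\{a_b(h_S),c(n)\}\ge c(n)\,\varepsilon_S$. Combining the two, $a(h_S)\ge (1-p_b)\,c(n)\,\varepsilon_S$ holds pointwise. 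If $\gA$ is randomized, this holds for each realization of its internal randomness.

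\textbf{Step 2 (take expectations and suprema).} Taking $\E_{S\sim P^n}$ (and over the learner's internal randomness) gives $\E[a(\gA(S))]\ge (1-p_b)c(n)\,\E[\varepsilon_S]$ for every $P$. Taking the supremum over $P\in\mathcal{P}_{\mathrm{real}}(p_b)$ on both sides preserves the inequality, since it holds for each $P$. The first bound of Theorem~\ref{thm-No-free-fair-algorithm}, instantiated with $c=c(n)\in(0,1]$, gives $\sup_P\E[\varepsilon_S]\ge 1/4$ for this $\gA$, because an infimum over learners lower-bounds each learner. Hence $\sup_P \E[a(\gA(S))]\ge \frac{1-p_b}{4}c(n)$. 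Since $\gA$ was arbitrary, taking the infimum over $\gA$ yields the claim for every $n$.

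\textbf{Main obstacle.} The only delicate point is making sure Theorem~\ref{thm-No-free-fair-algorithm} may legitimately be applied with a sample-size-dependent floor. Its statement holds for any fixed $c\in(0,1]$ and any $n$. For each fixed $n$, $c(n)$ is just a constant in $(0,1]$, so no uniformity in $n$ is required. I would also note explicitly that $\varepsilon_S$ is well defined, since $\max\{a_b,c(n)\}\ge c(n)>0$, which justifies the division in Step~1.
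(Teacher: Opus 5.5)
Your proof is correct, and it takes a different and more careful route than the paper's. The paper uses both bounds of Theorem~\ref{thm-No-free-fair-algorithm} as black boxes. It picks near-maximizers $P_1$ of $\E[\varepsilon_S]$ and $P_2$ of $\E[a(\gA(S))]/\E[\varepsilon_S]$, then takes $P^\star$ to be whichever gives the larger $\E[a(\gA(S))]$.

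As you anticipated, that step does not follow from the two statements alone. Nothing forces $\E_{P_2}[\varepsilon_S]\ge 1/4$, and nothing in the statement relates $\E_{P_1}[a(\gA(S))]$ to $\E_{P_1}[\varepsilon_S]$. The paper's argument goes through only because of facts established inside the proof of Theorem~\ref{thm:realizable_fixed}. One is that the constructed distribution satisfies both inequalities simultaneously. The other is that its Step~3 inequality $a(h_S)\ge c(1-p_b)\varepsilon_S$ never uses the construction, so it holds for every $P$.

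Your Step~1 makes the second fact explicit as a two-line pointwise coupling, valid for every pair $(P,\gX_b)$ with $\mathbb{P}(\gX_b)=p_b$. As a result you need only the first bound of Theorem~\ref{thm-No-free-fair-algorithm}. You also avoid dividing by $\E[\varepsilon_S]$, which the ratio form of the second bound tacitly assumes is positive, and you avoid the near-maximizer slack. The paper's route is shorter on the page, but your version is the one that is actually rigorous.
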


This result implies that any distribution-free guarantee on the expected risk must satisfy $\sup_{P}\mathbb{E}[a(h_S)] = \Omega(c(n))$. Consequently, enforcing a relatively strict and slowly decaying floor (e.g., $c(n)=1/\log n$) yields correspondingly slow convergence rates. In particular, achieving statistical consistency (i.e., $\mathbb{E}[a(h_S)] \to 0$) necessarily requires $c(n)\to 0$. 

\textbf{A Statistical Tension.} The above analysis shows that $c(n)$ governs a fundamental trade-off. Bounding the maximum possible disparity requires a non-vanishing $c(n)$, but this induces a non-vanishing worst-case cost floor that strictly precludes statistical consistency. Conversely, achieving vanishing cost necessitates $c(n)\to 0$, under which $\varepsilon_S$ increasingly reduces to raw disparity $\frac{a_{\bar b}(h_S)}{a_b(h_S)}$. Because the worst-case upper bound of this ratio scales as $1/c(n)$, allowing $c(n)\to 0$ leaves the disparity loosely constrained and mathematically permits severe unfairness. This reveals an inherent tension between distribution-free fairness guarantees and classical consistency.

\textbf{A Bottleneck of Sample Complexity.} Corollary~\ref{cor:rate-lower-bound} shows that $\sup_{P}\E_S[a(h_S)] = \Omega(c(n))$, revealing a fundamental statistical bottleneck specific to \emph{relative} fairness. Enforcing a nontrivial lower bound $c(n)$ on subgroup costs (to control ratios) directly limits how fast the overall cost can decay. Concretely, the choice of $c(n)$ determines preference on sample complexity or  equity:
\begin{itemize}
    \item \textit{Prioritizing fairness leads to exponential sample complexity.}
    If $c(n)$ decays slowly (e.g., $c(n)=1/\log n$), then subgroup costs may not easily become too small, which tightly controls the disparity $\varepsilon_S$. However, achieving $\E_S[a(h_S)] \le \nu$ now requires $c(n)\lesssim \nu$, implying $n \ge e^{\Omega(1/\nu)}$. In this regime, the interaction between rare subpopulations and strict ratio constraints converts the usual sublinear sample complexity into an exponential one.
    \item \textit{Prioritizing convergence leads to vacuous fairness.}
    If $c(n)$ decays rapidly (e.g., $c(n)=1/n$ or $1/n^2$), standard polynomial convergence rates are recovered. However, the worst-case disparity scales as $\varepsilon_S = \Omega(1/c(n))$, so $c(n)=\mathcal{O}(1/n^2)$ allows $\varepsilon_S=\Omega(n^2)$. In this regime, relative fairness guarantees can  become vacuous, as arbitrarily large disparities are permitted.
\end{itemize}

\textit{The Relevance Threshold.} This dichotomy reveals a pathology of ratio-based fairness metrics in large-sample regimes. As $n\to\infty$, absolute costs $a_b(h_S)$ and $a_{\bar b}(h_S)$ both tend to $0$. If they decay at slightly different rates (e.g., $1/n$ vs.\ $1/n^2$), their difference vanishes while their ratio $\varepsilon_S$ diverges. The floor $c(n)$ acts as a \emph{relevance threshold}: when $a_b(h_S)\gtrsim c(n)$, the ratio $\varepsilon_S$ enforces meaningful relative comparisons; when $a_b(h_S)\ll c(n)$, the ratio is effectively disregarded. This formalizes the intuition that once costs are negligible across all groups, amplifying vanishing differences is not meaningful. Consequently, the choice of $c(n)$ encodes a principled transition between prioritizing strict equity and prioritizing overall utility.

\subsection{No-Free-Fairness with Limited Model Classes}

We have seen that disparity can arise from the task itself or from the learning procedure. We now examine a third factor: the role of model capacity and inductive bias.

\begin{theorem}[Limited Models]\label{thm-No-free-fair-Architecture}
Consider any distribution $P$ over $\gX \times \gY$, a non-negative cost $\ell$, and a  class $\gH$ of decision rules/hypotheses. If there exists a subset $\gX_b \subseteq \gX$ with measure $p_b = \mathbb{P}(\gX_b)$ such that $0 < a_b^* := \min_{g \in \gH} a_b(g)$, then any model $h \in \gH$ satisfies 
\[
a(h) \ge \varepsilon\, a_b^* \big[1 - p_b \mathbf{I}\{\varepsilon>1\}\big], 
\quad \text{where } \varepsilon = \frac{a_{\bar{b}}(h)}{a_b(h)}.
\]
\end{theorem}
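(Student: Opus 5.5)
The plan is to mirror the argument behind Theorem~\ref{thm-No-free-fair-sampling}, with the optimal rule $h^*$ replaced by the within-class minimizer of the subgroup risk. The key identity is the decomposition
\[
a(h) = p_b\, a_b(h) + (1-p_b)\, a_{\bar b}(h),
\]
which holds for any $h \in \gH$ by the law of total expectation, since $\ell \ge 0$ and $p_b \in (0,1)$. By definition of $a_b^* = \min_{g\in\gH} a_b(g)$, every $h \in \gH$ satisfies $a_b(h) \ge a_b^* > 0$. In particular $a_b(h) > 0$, so $\varepsilon = a_{\bar b}(h)/a_b(h)$ is well defined and $a_{\bar b}(h) = \varepsilon\, a_b(h)$. (If $p_b \in \{0,1\}$ the conditional risks are not both defined, so I will assume $p_b \in (0,1)$, as in the Preliminaries.)

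Substituting gives $a(h) = a_b(h)\,[p_b + (1-p_b)\varepsilon]$, and I then split into two cases on $\varepsilon$.

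\textbf{Case $\varepsilon \le 1$.} Here the indicator vanishes and the target is $a(h) \ge \varepsilon a_b^*$. Since $p_b + (1-p_b)\varepsilon$ is a convex combination of $1$ and $\varepsilon$, it is at least $\min\{1,\varepsilon\} = \varepsilon$. Hence
\[
a(h) \ge \varepsilon\, a_b(h) \ge \varepsilon\, a_b^*.
\]

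\textbf{Case $\varepsilon > 1$.} The target is $a(h) \ge \varepsilon a_b^*(1-p_b)$. I drop the nonnegative term $p_b a_b(h)$ to get
\[
a(h) \ge (1-p_b)\,\varepsilon\, a_b(h) \ge (1-p_b)\,\varepsilon\, a_b^*.
\]

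Combining the two cases yields exactly the stated bound with the factor $1 - p_b\mathbf{I}\{\varepsilon>1\}$.

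There is no genuine obstacle; the argument is elementary. The one point to handle carefully is that the minimum defining $a_b^*$ may not be attained. If only an infimum exists, the inequality $a_b(h) \ge a_b^*$ still holds for every $h\in\gH$, so the proof goes through unchanged with $\inf$ in place of $\min$. I would also note that positivity of $a_b^*$ is what guarantees $a_b(h) > 0$ and hence that $\varepsilon$ is finite. Finally, this argument applies verbatim to Theorem~\ref{thm-No-free-fair-sampling} by taking $\gH$ to be all measurable rules, where $a_b(h) \ge a_b(h^*)$. That inequality requires $h^*$ to be optimal on $\gX_b$ as well, which holds for a pointwise-optimal $h^*$. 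This suggests the paper proves both theorems the same way.
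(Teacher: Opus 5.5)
Your proof is correct and matches the paper's argument essentially line for line. The paper uses the same decomposition $a(h)=p_b a_b(h)+(1-p_b)a_{\bar b}(h)$ with $a_{\bar b}(h)=\varepsilon a_b(h)$ and the same two cases. For $\varepsilon\le 1$ it writes $a(h)=(1-\varepsilon)p_b a_b(h)+\varepsilon a_b(h)\ge \varepsilon a_b(h)$, which is your convex-combination step. For $\varepsilon>1$ it drops $p_b a_b(h)$. It finishes with $a_b(h)\ge a_b^*$, and it proves Theorem~\ref{thm-No-free-fair-sampling} by the identical argument. Your added remark that an infimum suffices in place of the minimum is a harmless strengthening.
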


This theorem highlights the role of \emph{model capacity}. A sufficiently expressive class $\gH$ may achieve $a_b^* = 0$, while a limited or misspecified class can have $a_b^*>0$, meaning that no model in $\gH$ performs well on $\gX_b$. This leads to the following implications:

\begin{itemize}
\item \textit{Model Capacity as a Source of Fairness Limits.}
Theorem~\ref{thm-No-free-fair-Architecture} identifies model capacity as a third, distinct source of disparity. While Theorem~\ref{thm-No-free-fair-sampling} attributes unfairness to inherent task difficulty and Theorem~\ref{thm-No-free-fair-algorithm} to finite-sample effects, this result shows that \emph{the choice of model class alone} can enforce a trade-off. The condition $a_b^*>0$ means that the class $\gH$ cannot represent low-cost solutions on $\gX_b$. As a result, improving overall performance $a(h)$ tends to reduce $a_{\bar b}(h)$ while leaving $a_b(h)\ge a_b^*$, driving $\varepsilon \to 0$. Conversely, enforcing parity ($\varepsilon \approx 1$) requires increasing error outside $\gX_b$, leading to $a(h)\gtrsim a_b^*$. Thus, any nonzero subgroup floor $a_b^*$ induces a fairness--accuracy trade-off.

\item \textit{Representation Gaps and Feature Limitations.}
The quantity $a_b^*>0$ captures a \emph{representation gap}: the model class $\gH$ lacks the expressiveness needed to perform well on $\gX_b$. In practice, this can arise from limited features, restrictive architectures, or inductive biases that favor majority patterns. Unlike classical irreducible cost due to noise or ambiguity \citep{northcutt2021confident,song2022learning,koh2021wilds}, this limitation is \emph{model-dependent}: a better representation or richer class could reduce $a_b^*$. Hence $a_b^*$ can be interpreted as \emph{capacity-induced irreducibility}.

\item \textit{Cost Reallocation under Capacity Constraints.}
When $a_b^*>0$, minimizing the overall cost $a(h)=p_b a_b(h)+(1-p_b)a_{\bar b}(h)$ necessarily redistributes cost across groups. Optimizing within $\gH$ tends to reduce $a_{\bar b}(h)$ while leaving $a_b(h)$ bounded below by $a_b^*$, leading to $\varepsilon \to 0$. Enforcing $\varepsilon \approx 1$ instead requires increasing $a_{\bar b}(h)$, yielding $a(h)\gtrsim (1-p_b)a_b^*$. Thus, the subgroup with positive $a_b^*$ acts as a \emph{bottleneck} that determines the achievable trade-off. Unlike Theorem~\ref{thm-No-free-fair-sampling}, where the bottleneck is due to the data distribution, here it is determined by the model design.
\end{itemize}

Prior work often attributes unfairness primarily to biased or unrepresentative data \citep{bender2021dangers,FoundationModels21}. Theorem~\ref{thm-No-free-fair-Architecture} shows that this view is incomplete: even with abundant, balanced, and noise-free data, disparity can persist if $\gH$ cannot represent accurate decisions on $\gX_b$. The quantity $a_b^*>0$ depends only on the model class and not on sampling. Thus, data-centric interventions alone cannot eliminate unfairness when the model is misspecified. 

Together with Theorem~\ref{thm-No-free-fair-algorithm} and Theorem~\ref{thm-No-free-fair-sampling}, this result provides a unified picture: fairness limitations arise from three distinct sources—\emph{data}, \emph{algorithm}, and \emph{architecture}. Each imposes its own constraint, and none can be fully resolved by improving the others alone.

\textbf{Connection to Modern Models (e.g., LLMs).} Recent empirical studies have documented various forms of bias and inequity in LLMs, including demographic bias \citep{bolukbasi2016man,sheng2021societal,jeung2025large, reddy2025metamorphic}, representational harms \citep{abid2021persistent,FoundationModels21}, and disparities in toxicity or refusal behavior across social groups \citep{gehman2020realtoxicityprompts} or social biases in health care \citep{liu2025potentialHealth}. \citet{gallegos2024bias} highlight the breadth of social bias phenomena and mitigation approaches in LLMs. Both generative behaviors and identity biases have been observed even after fine-tuning and alignment, indicating challenges in eliminating harmful patterns \citep{hu2025biases}. Theorem~\ref{thm-No-free-fair-Architecture} suggests a structural explanation: if the deployed architecture induces $a_b^*>0$ for certain subpopulations, then disparity is unavoidable regardless of data scale or alignment procedures. This motivates architectural interventions for debiasing, such as richer representations, subgroup-aware modules, or adaptive capacity allocation.

\section{Comparison with Prior Impossibility Results.}

The fundamental tension between predictive accuracy and algorithmic fairness has been extensively studied, yet existing theories isolate different (and often orthogonal) sources of this tension. Our framework unifies and sharpens these perspectives by identifying three distinct mechanisms: \emph{task-inherent} (Theorem~\ref{thm-No-free-fair-sampling}), \emph{algorithm-inherent} (Theorem~\ref{thm-No-free-fair-algorithm}), and \emph{architecture-inherent} (Theorem~\ref{thm-No-free-fair-Architecture}). Below, we contrast these mechanisms with prior work along several key dimensions.

\textbf{Task-Inherent Bias, Algorithmic Cost, and Statistical Bottlenecks.}
\citet{cummings2019compatibility} show that a good classifier cannot simultaneously achieve both differential privacy and exact fairness (equal opportunity), revealing a fundamental fairness--cost trade-off. \citet{pinzon2022impossibility} further establish the existence of tasks for which no predictor is both accurate and exactly fair. In these works, the tension arises because fairness constraints force the learner to deviate from minimizing $a(h)$, leading to gaps of the form $a(h_{\text{fair}}) - a(h^*)$ that depend on the chosen constraint.

In contrast, our results show that a nontrivial trade-off exists \emph{even before imposing any constraint}. In Theorems~\ref{thm-No-free-fair-sampling} and \ref{thm-No-free-fair-Architecture}, the lower bound $a(h) \ge \varepsilon\, a_b^\dagger \big[1 - p_b \mathbf{I}(\varepsilon>1)\big]$, where $a_b^\dagger \in \{a_b(h^*),\, a_b^*\}$, is intrinsic to the problem. Here $a_b(h^*)$ captures irreducible (task-inherent) cost, while $a_b^*=\min_{g\in\gH} a_b(g)$ captures capacity-induced cost. Thus, the fairness--cost frontier is fundamentally constrained by \emph{statistical or representational heterogeneity}, rather than by externally imposed constraints. More importantly, while \citet{pinzon2022impossibility} establish task limitations through carefully constructed worst-case tasks, our Theorem~\ref{thm-No-free-fair-sampling} shows that such limitations arise naturally in realistic settings, as discussed in Remark~\ref{remark-practicality-tasks}.

Beyond these structural limits, our analysis also reveals a distinct \emph{statistical bottleneck} for relative fairness. Corollary~\ref{cor:rate-lower-bound} shows that enforcing a nontrivial fairness floor $c(n)$ directly constrains the achievable convergence rate. In particular, maintaining strict relative fairness (slowly decaying $c(n)$) can force \emph{exponential sample complexity}, while recovering classical rates (rapidly decaying $c(n)$) renders fairness guarantees vacuous. This phenomenon has no analogue in prior analyses.

\textbf{Metric Incompatibility vs. Intrinsic Disparity.} Foundational impossibility results \citep{kleinberg2017inherent,chouldechova2017fair,pleiss2017fairness} show that multiple fairness criteria (e.g., calibration and equalized odds) cannot be simultaneously satisfied when base rates differ: $P(Y=1\mid \gX_b) \neq P(Y=1\mid \gX_{\bar b})$. Many fairness metrics often rely on concepts of independence, separation, and sufficiency, which cannot hold \textit{simultaneously} in non-trivial settings \citep{berk2021fairness}. These are \emph{multi-metric incompatibility} results: impossibility arises from conflicting definitions.

Our framework instead considers a \emph{single disparity measure}, $\varepsilon = a_{\bar b}(h)/a_b(h)$, and shows that even this minimal notion cannot be optimized jointly with accuracy. Hence, disparity persists \emph{without invoking incompatible constraints}. Moreover, unlike \citet{dwork2012fairness}, which requires a task-specific similarity metric to define individual fairness, our results operate purely at the level of expected risks, showing that even coarse group-level parity is fundamentally constrained. This demonstrates that unfairness is not merely a definitional artifact, but a consequence of heterogeneous error landscapes.

\textbf{Beyond Classification: Loss-Agnostic Fairness Limits.} A further distinction is that much of the existing theory is inherently tied to classification, where fairness notions are defined via confusion matrix quantities such as FPR and FNR  \citep{hardt2016equality,kleinberg2017inherent,chouldechova2017fair,pleiss2017fairness}. These formulations do not easily extend to regression, ranking, or generative modeling.

In contrast, our results in Theorems \ref{thm-No-free-fair-sampling} and \ref{thm-No-free-fair-Architecture} are expressed in terms of a general non-negative cost $\ell$ and subgroup risks $a_b(h)$, and therefore apply uniformly across learning settings. This shows that the fairness--accuracy tension is not a peculiarity of classification metrics, but a general property of risk minimization under heterogeneous populations. Consequently, our framework extends the scope of fairness impossibility results to a broader class of modern learning problems, including regression and generative modeling, where prior theory provides limited guidance.

\textbf{Distributional Assumptions vs. Minimax Finite-Sample Limits.} Many prior results rely on specific distributional conditions (e.g., differing base rates or noise levels). In contrast, Theorem~\ref{thm-No-free-fair-algorithm} establishes a \emph{distribution-free minimax bound} for $\mathbb{E}_S[\varepsilon_S]$. This holds even in realizable settings where $a_b(h^*)=a_{\bar b}(h^*)=0$. Thus, unfairness can arise purely from finite-sample variability, without noise or structural bias. Together with Theorem~\ref{thm-No-free-fair-sampling}, this yields a unified picture: \textit{disparity is unavoidable both in finite-sample regimes and in asymptotic regimes with irreducible error.}

\textbf{Rates and Sample Complexity under Fairness Constraints.} Beyond existence results \cite{cummings2019compatibility,pinzon2022impossibility}, our framework yields quantitative implications for statistical rates. Corollary~\ref{cor:rate-lower-bound} shows that any distribution-free guarantee must satisfy $\sup_{P}\mathbb{E}_{S\sim P^n}[a(h_S)] \;=\; \Omega(c(n))$, thereby constraining how the expected risk can decay as a function of the sample size $n$. In particular, achieving statistical consistency (i.e., $\mathbb{E}[a(h_S)] \to 0$ as $n\to\infty$) requires $c(n)\to 0$, which effectively relaxes the fairness constraint and permits larger disparity. Conversely, maintaining a non-vanishing, scale-stable notion of fairness (constant $c$) induces a non-vanishing worst-case error floor.

This result seems orthogonal to some prior studies \cite{agarwal2018reductions,donini2018empirical,woodworth2017learning,cummings2019compatibility} which show polynomial convergence rates for the expected risk. The main reason for this difference comes from the use of absolute disparity such as $| a_b(h) - a(h)|$ in \cite{agarwal2018reductions} or $| a_b(h) - a_{\bar b}(h)|$ in \cite{donini2018empirical,woodworth2017learning,cummings2019compatibility}. As discussed before, while absolute disparity is  convenient for analysis, it is inherently scale-dependent and can obscure disproportionate harm in low-risk regimes. Therefore, despite providing various insights, prior guarantees with polynomial rates may not reflect the true cost when imposing fairness constraints in practice.

In contrast, the use of risk ratio in our framework can better align with ratio-based regulatory standards. Our results show that  fairness constraints fundamentally limit the achievable \emph{statistical rates} and sample complexity in a distribution-free setting. The limitation arises even with perfect empirical risk minimization, and is  driven by statistical variability and subgroup structure.


\textbf{Architecture and Representation vs. Data-Centric Views.} Data-centric perspectives \citep{bender2021dangers,FoundationModels21} attribute bias primarily to skewed or noisy datasets. Theorem~\ref{thm-No-free-fair-Architecture} shows that even with ideal data, the hypothesis class can induce disparity. This connects to approximation theory: restricting $\gH$ induces non-uniform approximation error across the population, which translates into fairness gaps. Hence, disparity can persist even under perfect data curation, unless the model class is sufficiently expressive. This complements empirical findings on representation bias and subgroup performance gaps, and suggests that \emph{architectural capacity is a crucial determinant of fairness}.

\textit{Implications for Mitigation Strategies.} Existing mitigation approaches, such as post-hoc thresholding \citep{hardt2016equality} or data repair \citep{feldman2015certifying}, operate within a fixed model class $\gH$ and dataset. Our results show that such methods may not overcome structural limits imposed by $a_b^\dagger>0$ or finite-sample bounds. Therefore, meaningful mitigation may require \emph{changing the problem itself}: enriching features, increasing subgroup coverage, or expanding $\gH$ to reduce $a_b^*$. This reframes fairness as a systems-level design problem rather than a purely algorithmic one.

\section{Conclusion} \label{sec-Conclusion}
We establish a unified set of \emph{No-Free-Fairness} results that characterize fundamental limits of achieving fairness in learning systems along three axes. These results move beyond metric incompatibility and existential constructions, demonstrating that disparity can arise from general and non-pathological properties of tasks, learners, and models. In this sense, disparity is not merely a consequence of flawed data or misaligned optimization, but a structural feature of statistical learning that must be explicitly managed.

Our analysis focuses on relative fairness measured by the risk ratio and primarily studies population risks and worst-case guarantees, which may not capture all practically relevant notions of fairness or distributional structure encountered in real systems. While our results extend beyond standard classification settings, they do not fully address dynamic, strategic, or feedback-driven environments where decisions influence future data. Finally, although we argue that the identified phenomena arise under broad conditions, quantifying their exact severity in specific real-world applications remains an open empirical question. These limitations point to important directions for future work. 

\section{Broader Impact} \label{sec-Broader-Impact}

This work studies fundamental limits of fairness in  learning systems. Our results suggest that unfairness cannot always be eliminated through better optimization, larger datasets, or post-hoc correction alone, because disparities may arise from intrinsic uncertainty, finite-sample effects, or limitations of the model class itself. By formalizing these constraints, our work may help practitioners and policymakers develop more realistic expectations regarding the capabilities and limitations of AI systems deployed in high-stakes settings.

A potential positive impact of this work is to encourage more transparent evaluation of fairness trade-offs and more principled design of learning systems, particularly for underrepresented or difficult subpopulations. Our framework may also motivate future research on adaptive data collection, robust model design, and application-specific notions of equity.

At the same time, these impossibility results could be misinterpreted as suggesting that fairness efforts are futile. This is not the intent of our work. Rather, our results emphasize that fairness requires explicit assumptions, design choices, and trade-offs, and cannot generally be guaranteed in a distribution-free manner. Understanding these limitations is important for developing more reliable and accountable AI systems.

\bibliography{fair,ann,complexity}
\bibliographystyle{abbrvnat}


\newpage
\appendix
\onecolumn

\section{Proof for No-free-fairness Task} \label{app-Proofs-No-free-fairness-sampling}

\begin{proof}[Proof of Theorem \ref{thm-No-free-fair-sampling}]

First, we observe that $a_b(h) \ge a_b(h^*) >0$, due to the optimality of $h^*$. We next analyze two cases.

\textbf{Case 1:} $\varepsilon \le 1$. Observe that
\begin{eqnarray}
a(h) &=& p_b a_b(h) + (1-p_b) a_{\bar{b}}(h) \\
&=&  p_b a_b(h) + (1-p_b) \varepsilon a_b(h) \\
&=& (1-\varepsilon) p_b a_b(h) + \varepsilon a_b(h) \\
&\ge& \varepsilon a_b(h) \\
&\ge& \varepsilon a_b(h^*).
\end{eqnarray}

\textbf{Case 2:} $\varepsilon > 1$. Observe that
\begin{eqnarray}
a(h) &=& p_b a_b(h) + (1-p_b) a_{\bar{b}}(h) \\
&=&  p_b a_b(h) + (1-p_b) \varepsilon a_b(h) \\
&\ge& \varepsilon(1-p_b) a_b(h) \\
&\ge& \varepsilon(1-p_b) a_b(h^*).
\end{eqnarray}

Combining those two cases lead to the following:
\begin{eqnarray}
a(h) &\ge& 
\begin{cases}
	\varepsilon(1-p_b) a_b(h^*) & \text{ if } \varepsilon >1\\
	\varepsilon a_b(h^*) & \text{ otherwise } 
\end{cases} \\
&=& \varepsilon a_b(h^*)
\begin{cases}
	1-p_b & \text{ if } \varepsilon >1\\
	1 & \text{ otherwise } 
\end{cases} \\
&=& \varepsilon a_b(h^*) \big[1 - p_b \mathbf{I}\{\varepsilon>1\}\big].
\end{eqnarray}
\end{proof}

\section{Proof for No-free-fairness algorithm} \label{app-Proofs-No-free-fairness-algorithm}

Theorem \ref{thm-No-free-fair-algorithm} is a corollary of the following result.

\begin{theorem}\label{thm:realizable_fixed}
Consider any measurable input space $\gX$ which admits a non-atomic distribution,  any (possibly randomized) learning algorithm $\mathcal{A}$ that maps each dataset $S$ to $h_S:\gX \to \{0,1\}$,  0-1 loss, any real number $p_b \in (0,1)$, any $c \in (0,1]$ that may depend on $n$ and $p_b$.

Then there exist a realizable distribution $P$ over $\gX \times \{0,1\}$ and a measurable subset $\gX_b \subseteq \gX$ with $\mathbb{P}(\gX_b) = p_b$ such that:
\begin{eqnarray}
\E_{S}[\varepsilon_S] &\ge& 1/4, \\
\label{eq:main_lb_fixed-02}
\E_{S}[a(h_S)] &\ge& c(1-p_b) \E_{S}[\varepsilon_S], 
\end{eqnarray}
where $\varepsilon_S = {a_{\bar b}(h_S)}/{\max\{a_b(h_S), c\}}$.
\end{theorem}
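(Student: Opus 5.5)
The second inequality holds pointwise for every distribution, so it needs no construction. All the work goes into the first inequality, which I would obtain from a No-Free-Lunch style randomization argument confined to the complement $\gX\setminus\gX_b$.

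\textbf{Step 1 (second inequality).} Fix any $P$ and any realization of $S$ and of the learner's internal randomness. Since $\max\{a_b(h_S),c\}\ge c$, we have $a_{\bar b}(h_S)=\varepsilon_S\max\{a_b(h_S),c\}\ge c\,\varepsilon_S$. Hence
\[
a(h_S)=p_b a_b(h_S)+(1-p_b)a_{\bar b}(h_S)\ge (1-p_b)\,c\,\varepsilon_S .
\]
Taking expectations gives \eqref{eq:main_lb_fixed-02} for whatever realizable $P$ is chosen in Step 2.

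\textbf{Step 2 (reduction of the first inequality).} Because $\ell$ is the 0-1 loss, $a_b(h_S)\le 1$. With $c\le 1$ this gives $\max\{a_b(h_S),c\}\le 1$, so $\varepsilon_S\ge a_{\bar b}(h_S)$. It therefore suffices to exhibit a realizable $P$ with $\E_S[a_{\bar b}(h_S)]\ge 1/4$, that is, a learner that errs on a quarter of the majority region in expectation.

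\textbf{Step 3 (construction).} Let $\mu$ be a non-atomic probability measure on $\gX$. By non-atomicity (Sierpi\'nski's intermediate-value property), partition $\gX$ into measurable sets $\gX_b, B_1,\dots,B_{2m}$ with $\mu(\gX_b)=p_b$ and $\mu(B_j)=(1-p_b)/(2m)$, where I take $m\ge n$. For each $\sigma\in\{0,1\}^{2m}$, define $P_\sigma$ as follows: the marginal is $\mu$, the label is $y=\sigma_j$ on $B_j$, and $y=0$ on $\gX_b$. The labeling function is a finite measurable step function, so each $P_\sigma$ is realizable, and $\mathbb{P}(\gX_b)=p_b$.

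Now draw $\sigma$ uniformly at random. Conditional on the sampled inputs, the labels of the blocks $B_j$ not hit by $S$ are independent fair coins, and they are independent of both $S$ and the learner's randomness. So $h_S$ errs on each point of an unhit block with probability exactly $1/2$. Consequently
\[
\E_\sigma\E_S[a_{\bar b}(h_S)]\ \ge\ \tfrac12\,\E\big[\#\{j:\ B_j\cap S=\emptyset\}\big]/(2m).
\]
At most $n$ blocks are hit, so the unhit fraction is at least $1-n/(2m)\ge 1/2$, which gives a lower bound of $1/4$. Since the average over $\sigma$ is at least $1/4$, some fixed $\sigma$ attains $\E_S[a_{\bar b}(h_S)]\ge 1/4$. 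Combining with Step 2 yields $\E_S[\varepsilon_S]\ge 1/4$, and Step 1 holds for this same $P_\sigma$.

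\textbf{Main obstacle.} The delicate point is Step 3's conditional-independence claim for a possibly randomized learner. I would handle it by writing the expectation as an integral over the internal randomness $\xi$, the sampled inputs $x_{1:n}$, and $\sigma$, then applying Fubini. For fixed $(\xi,x_{1:n})$, the hypothesis $h_S$ depends on $\sigma$ only through the coordinates of hit blocks. Averaging over the remaining coordinates therefore yields error exactly $1/2$ on every unhit block. Measurability of the partition is guaranteed by non-atomicity of $\mu$.

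Theorem~\ref{thm-No-free-fair-algorithm} then follows by taking $\sup_P$ and $\inf_{\gA}$.
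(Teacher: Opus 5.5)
Your proposal is correct and follows the paper's proof essentially step for step. You use the same random-labeling construction on equal-mass blocks of $\gX\setminus\gX_b$ with label $0$ on $\gX_b$, the same reduction $\varepsilon_S\ge a_{\bar b}(h_S)$ from $\max\{a_b(h_S),c\}\le 1$, and the same pointwise bound $a_{\bar b}(h_S)\ge c\,\varepsilon_S$, which you obtain in one line where the paper splits into two cases. Using $2m$ blocks with $m\ge n$ instead of the paper's $m\ge 2n$ blocks is cosmetic, and your Fubini treatment of the learner's internal randomness is more careful than the paper's.
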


\begin{proof}
Choose any integer $m \ge 2n$ where $n$ is the number of samples in $S$. The proof contains three main steps.

\textbf{Step 1: Partition of $\gX$ and define $P$.}
By the assumption of $\gX$, there exist a partition $\gX = \gX_b \cup \gX_1 \cup \cdots \cup \gX_m$ of $\gX$ into  disjoint measurable sets, and  a marginal distribution $P_X$ over $\gX$ so that 
\begin{equation}
\label{eq:px_def_fixed}
P_X(\gX_b) = p_b, \qquad
P_X(\gX_i) = \frac{1-p_b}{m}, \quad i \in [m].
\end{equation}

Draw $\theta = (\theta_1,\dots,\theta_m) \sim \mathrm{Unif}(\{0,1\}^m)$. Define $P_{Y|X}$ by choosing
\begin{equation}
\label{eq:pyx_fixed}
Y =
\begin{cases}
0 & \text{if } x \in \gX_b,\\
\theta_i & \text{if } x \in \gX_i.
\end{cases}
\end{equation}
Let $P = P_X \times P_{Y|X}$. This distribution is realizable by $h^*(x)=0$ on $\gX_b$ and $h^*(x)=\theta_i$ on $\gX_i$, hence $a(h^*)=0$.

\textbf{Step 2: Lower bound on $a_{\bar b}(h_S)$ and $\varepsilon_S$.}
Let $I_S = \{i \in [m]: \gX_i \cap S \neq \emptyset\}$ and $U_S = [m]\setminus I_S$. Then $|I_S| \le n$, hence $|U_S| \ge m-n \ge m/2$.

For any $i \in U_S$, the variable $\theta_i$ is independent of $S$, hence
\begin{equation}
\label{eq:indep_correct}
\mathbb{E}_{\theta}\!\left[\mathbf{I}\{h_S(x)\neq Y\} \mid x \in \gX_i, S \right] = \tfrac{1}{2}.
\end{equation}

Therefore,
\begin{eqnarray}
\mathbb{E}_{S,\theta}[a_{\bar b}(h_S)]
&=& \mathbb{E}_S \left[
\sum_{i=1}^m \frac{1}{m}
\mathbb{E}_{\theta}[\mathbf{I}\{h_S(x)\neq Y\} \mid x \in \gX_i, S]
\right] \\
&\ge& \mathbb{E}_S \left[
\sum_{i \in U_S} \frac{1}{m} \cdot \frac{1}{2}
\right] \\
&=& \mathbb{E}_S \left[\frac{|U_S|}{2m}\right] \\
&\ge& \frac{1}{4}.
\label{eq:abarb_fixed}
\end{eqnarray}

Since the expected value of the risk over the random choice of $\theta$ satisfies the lower bound, there must exist at least one specific realization of $\theta$ (and thus a specific realizable distribution $P$) for which the following holds: 
\begin{equation}
\label{eq:outside}
\E_{S}[a_{\bar b}(h_S)] \ge 1/4.
\end{equation}
Since $a_b(h_S) \le 1$ and $c \le 1$, we have $\max\{a_b(h_S), c\} \le 1$, which implies $\varepsilon_S \ge a_{\bar b}(h_S)$. As a result, we have  $\E_S[\varepsilon_S] \ge \E_{S}[a_{\bar b}(h_S)] \ge 1/4$. 

\textbf{Step 3: Bounding $a(h_S)$.} Note that
\begin{equation}
\label{eq:risk_fixed}
a(h_S) = p_b a_b(h_S) + (1-p_b) a_{\bar b}(h_S) \ge (1-p_b) a_{\bar b}(h_S).
\end{equation}

We next consider the following cases.

\emph{Case 1: $a_b(h_S) \ge c$.} Observe that $\varepsilon_S = \frac{a_{\bar b}(h_S)}{a_b(h_S)} \le \frac{a_{\bar b}(h_S)}{c} $. So $a_{\bar b}(h_S) \ge c \varepsilon_S$. 

\emph{Case 2: $a_b(h_S) < c$.} Then $\varepsilon_S = \frac{a_{\bar b}(h_S)}{c}$, and hence $a_{\bar b}(h_S) = c\varepsilon_S$.

Both cases show that $a_{\bar b}(h_S) \ge c\varepsilon_S$. Combining this with  \eqref{eq:risk_fixed} , we obtain $a(h_S) \ge c(1-p_b) \varepsilon_S$. This holds for every $\theta$. Taking expectation over $S$ yields 
\begin{equation}
\label{eq:ratio}
\E_{S}[a(h_S)] \ge c(1-p_b)  \E_{S}[\varepsilon_S],
\end{equation}
completing the proof.
\end{proof}

\begin{proof}[Proof of Corollary \ref{cor:rate-lower-bound}]
By Theorem~\ref{thm-No-free-fair-algorithm}, for any $\gA$,
\[
\sup_{P} \mathbb{E}[\varepsilon_S] \;\ge\; \frac{1}{4},
\qquad
\sup_{P} \frac{\mathbb{E}[a(\gA(S))]}{\mathbb{E}[\varepsilon_S]} \;\ge\; c(n)(1-p_b).
\]
Let $P_1,P_2\in\mathcal{P}_{\mathrm{real}}(p_b)$ be (near-)maximizers of the two suprema above. Then
$\mathbb{E}_{P_1}[\varepsilon_S]\ge 1/4$ and
$\mathbb{E}_{P_2}[a(\gA(S))]\ge c(n)(1-p_b)\,\mathbb{E}_{P_2}[\varepsilon_S]$.
Taking $P^\star$ to be whichever of $P_1,P_2$ yields the larger $\mathbb{E}[a(\gA(S))]$, we obtain
\[
\sup_{P} \mathbb{E}[a(\gA(S))]
\;\ge\; \mathbb{E}_{P^\star}[a(\gA(S))]
\;\ge\; \frac{1-p_b}{4}\,c(n).
\]
Taking $\inf_{\gA}$ completes the proof.
\end{proof}

\section{Proof for No-free-fairness Architecture} \label{app-Proofs-No-free-fairness-Architecture}

\begin{proof}[Proof of Theorem \ref{thm-No-free-fair-Architecture}]

First, we observe that $a_b(h) \ge a_b^* >0$, by definition. We next analyze two cases.

\textbf{Case 1:} $\varepsilon \le 1$. Observe that
\begin{eqnarray}
a(h) &=& p_b a_b(h) + (1-p_b) a_{\bar{b}}(h) \\
&=&  p_b a_b(h) + (1-p_b) \varepsilon a_b(h) \\
&=& (1-\varepsilon) p_b a_b(h) + \varepsilon a_b(h) \\
&\ge& \varepsilon a_b(h) \\
&\ge& \varepsilon a_b^*.
\end{eqnarray}

\textbf{Case 2:} $\varepsilon > 1$. Observe that
\begin{eqnarray}
a(h) &=& p_b a_b(h) + (1-p_b) a_{\bar{b}}(h) \\
&=&  p_b a_b(h) + (1-p_b) \varepsilon a_b(h) \\
&\ge& \varepsilon(1-p_b) a_b(h) \\
&\ge& \varepsilon(1-p_b) a_b^*.
\end{eqnarray}

Combining those two cases lead to the following:
\begin{eqnarray}
a(h) &\ge& 
\begin{cases}
	\varepsilon(1-p_b) a_b^* & \text{ if } \varepsilon >1\\
	\varepsilon a_b^* & \text{ otherwise } 
\end{cases} \\
&=& \varepsilon a_b^*
\begin{cases}
	1-p_b & \text{ if } \varepsilon >1\\
	1 & \text{ otherwise } 
\end{cases} \\
&=& \varepsilon a_b^* \big[1 - p_b \mathbf{I}\{\varepsilon>1\}\big].
\end{eqnarray}
\end{proof}

\section{Justification for the Risk Ratio as a Fairness Metric} \label{app:risk-ratio-justification}

Throughout this work, we formalize relative fairness via the \emph{risk ratio} $\mathrm{RR}(h)=a_{\bar b}(h)/a_b(h)$ rather than the mathematically friendlier absolute difference $\Delta(h)=|a_{\bar b}(h)-a_b(h)|$. Although $\Delta(h)$ admits standard polynomial convergence rates in realizable PAC regimes \cite{woodworth2017learning,agarwal2018reductions,laakom25FairnessOverfitting}, it is intrinsically \emph{scale-dependent} and can be statistically and operationally misleading. In contrast, ratio-based notions encode \emph{proportional harm}, which is the quantity of interest in many high-stakes settings. Below we articulate the theoretical and practical imperatives for adopting $\mathrm{RR}(h)$, thereby motivating why the statistical bottlenecks identified in our main results are fundamental.

\begin{itemize}
    \item \textbf{Scale sensitivity in low-risk regimes.}     Absolute difference ignores the baseline magnitude of risk. For instance, let's fix the same $\Delta(h)=0.01$. If $a_b(h)=0.51$ and $a_{\bar b}(h)=0.50$, then $\mathrm{RR}(h)\approx 1.02$, indicating negligible disparity. In contrast, if $a_b(h)=0.02$ and $a_{\bar b}(h)=0.01$, then $\mathrm{RR}(h)=2$, i.e., a $100\%$ multiplicative increase in risk. More generally, for $a_{\bar b}(h)\ge a_b(h)$ we have $\mathrm{RR}(h)=1+\Delta(h)/a_b(h)$, which diverges as $a_b(h)\to 0$. Hence, absolute difference $\Delta(h)$ systematically under-penalizes disparities in low-incidence regimes.

    \item \textbf{Sensitivity under multiplicative improvements.}    Suppose model updates induce multiplicative improvements $a_b(h')=\gamma_b a_b(h)$ and $a_{\bar b}(h')=\gamma_{\bar b} a_{\bar b}(h)$ with $\gamma_b,\gamma_{\bar b}\in(0,1)$. Then $\mathrm{RR}(h')/\mathrm{RR}(h)=\gamma_{\bar b}/\gamma_b$. Even when both groups improve, relative disparity worsens whenever $\gamma_{\bar b}>\gamma_b$. In contrast, $\Delta(h')=\gamma_{\bar b}a_{\bar b}(h)-\gamma_b a_b(h)$ may shrink purely due to global error reduction, masking a strict widening of proportional inequity.

    \item \textbf{Compatibility with ratio-based constraints in regulation.}    Many regulatory standards are explicitly ratio-based. For instance, the U.S. Equal Employment Opportunity Commission's (EEOC) ``Four-Fifths Rule'' requires the ratio of selection rates to be at least $0.8$. Such constraints are naturally expressed as lower bounds on a ratio. Optimizing $\Delta(h)$ provides no guarantee of satisfying constraints of the form $\mathrm{RR}(h)\le \tau$ or $\mathrm{RR}(h)\ge \tau$, whereas directly controlling $\mathrm{RR}(h)$ yields immediate compliance guarantees (cf.~\cite{feldman2015certifying,hardt2016equality}).
    
     \item \textbf{Alignment with epidemiological and risk-analysis standards.}   The ratio $\mathrm{RR}(h)$ coincides with \emph{relative risk}, the canonical measure in epidemiology for quantifying disproportionate harm. This ensures that fairness assessments in ML are directly interpretable within established scientific frameworks (cf.~\cite{pearl2014interpretation}).

    \item \textbf{Scale invariance and cross-domain interpretability.}    The quantity $\Delta(h)$ is tied to the absolute scale of the task and lacks a canonical interpretation across domains. In contrast, $\mathrm{RR}(h)$ is dimensionless: $\mathrm{RR}(h)=1.5$ uniformly encodes a $50\%$ excess risk regardless of whether $a_b(h)$ is $10^{-3}$ or $0.3$. This invariance is essential for transferring fairness thresholds across heterogeneous applications.

    \item \textbf{Statistical behavior under class imbalance.}    When $p_b\ll 1$, accurate estimation of $a_b(h)$ requires $n\gg 1/p_b$, and errors in estimating $a_b(h)$ can dominate $\Delta(h)$. For the ratio, the relevant quantity is multiplicative error, e.g., $|\widehat{a}_{\bar b}(h)/\widehat{a}_b(h)-a_{\bar b}(h)/a_b(h)|$, which directly reflects proportional uncertainty. Our lower bounds show that controlling such quantities necessarily incurs sample complexity scaling with $1/\min\{a_b(h),a_{\bar b}(h)\}$, revealing an intrinsic dependence on rare-event regimes.

    \item \textbf{Incompatibility of absolute control with proportional guarantees.}  Without assumptions on $a_b(h)$,   there is no distribution-free mapping from a bound $\Delta(h)\le \nu$ to a meaningful bound on $\mathrm{RR}(h)$. Indeed, $\mathrm{RR}(h)\le 1+\nu/a_b(h)$, which becomes vacuous as $a_b(h)\to 0$. This means prior results on absolute disparity do not directly apply to ratio-based fairness. Therefore, guaranteeing proportional fairness necessitates direct control of ratio-type quantities, which in turn induces the sharper statistical trade-offs established in our main results.
\end{itemize}

In summary, absolute disparity is analytically convenient but misaligned with proportional notions of harm. In contrast, $\mathrm{RR}(h)$ is scale-invariant, regulation-compatible, and statistically faithful to rare-event regimes. Consequently, the stringent sample complexity requirements and trade-offs identified in our results are unavoidable: they arise from the intrinsic difficulty of estimating and controlling multiplicative disparities.



\end{document}